\documentclass[11pt]{article}
\usepackage[T1]{fontenc}
\usepackage{lmodern}
\usepackage[margin=1in]{geometry}
\usepackage{amsmath,amssymb,amsthm}
\usepackage{booktabs,array}
\usepackage{microtype}
\usepackage{tikz}
\usepackage{hyperref}
\hypersetup{colorlinks=true,linkcolor=blue!45!black,citecolor=blue!45!black,urlcolor=blue!45!black,
 pdftitle={An Exact 93-Move Solution to a Nonlocal Crystal-Exchange Puzzle},
 pdfauthor={Zilai Wang}}
\newtheorem{theorem}{Theorem}[section]
\newtheorem{lemma}[theorem]{Lemma}
\newtheorem{proposition}[theorem]{Proposition}
\newtheorem{corollary}[theorem]{Corollary}
\theoremstyle{definition}
\newtheorem{definition}[theorem]{Definition}
\theoremstyle{remark}

\newcommand{\norm}[1]{\left\lVert #1\right\rVert_1}
\newcommand{\dist}{\operatorname{dist}}
\title{An Exact 93-Move Solution to a\\Nonlocal Crystal-Exchange Puzzle}
\author{Zilai Wang\\
 \small\href{mailto:wzlfly87@gmail.com}{\texttt{wzlfly87@gmail.com}}}
\date{October 21, 2026}
\begin{document}
\maketitle

\begin{abstract}
We determine the minimum number of directional commands needed to solve a
level of \emph{Order of the Sinking Star}, modeled as a crystal-exchange
puzzle on an \(11\times11\) board. A command swaps
the player with the nearest crystal in the selected row or column, regardless
of distance; only an empty ray permits an ordinary unit walk. Two gates admit
the player to the exit exactly while all sixteen crystals occupy their target
cells. We give an explicit 93-command solution and exclude every shorter
solution by an exact, cost-bounded breadth-first search. The search represents
a position by its seventeen occupied cells and one active cell. Its admissible
heuristic partitions command costs between a compressed core abstraction and
residual motion, strengthened by a necessary activation detour and a corner
visit condition. The completed computation retained 605,349,215 occupied sets
and 1,863,925,136 states. We prove the reductions and pruning conditions,
describe the exact storage scheme, and provide source code, execution logs,
and a reproduction driver. The result is a computer-assisted proof for the
specified instance, with an explicit implementation trust boundary.
\end{abstract}

\section{Introduction}
\label{sec:intro}
We study a crystal-exchange level from the puzzle game
\emph{Order of the Sinking Star}.
Nonlocal exchanges make familiar geometric intuitions unreliable in
shortest-path puzzles. A single command can move the player across an entire
row, while placing every crystal on a target need not finish the puzzle:
the player must still reach an exit. The position of the player at the moment
the gates open is therefore part of the optimization problem.

The level was supplied as a diagram together with
its movement rules. Section~\ref{sec:model} gives a self-contained
mathematical specification, so the result does not depend on access to a
particular game implementation. Our main result is an exact minimum of
93 directional commands. We give a witness and a computational exclusion
of all routes of length at most 92.

The computational method uses established principles of admissible search.
Pattern databases store exact distances in abstractions of a state
space~\cite{culberson1998}. Additive pattern databases and action-cost
partitioning permit several lower bounds to be summed when their allocated
costs do not exceed the cost of a real action~\cite{felner2004,katz2008}.
Here the partition is fixed and binary: a command contributes to exactly
one of two cost components. We specialize these principles to nonlocal
exchange by retaining occupied points, rather than crystal identities,
and deriving a residual-motion bound from the need to activate an
off-target occupied point.

Our contribution is the exact solution of this instance and a documented
search formulation that makes the result reproducible. We do not claim a
new general-purpose search algorithm, a complexity classification of the
puzzle family, or a proof checked by a formal proof assistant.

\section{The instance and its exact state space}
\label{sec:model}
Rows and columns are numbered from 1 to 11, with rows increasing downward.
Let
\[
 V=\{1,\ldots,11\}^2,\qquad K=\{4,\ldots,8\}^2,
\]
and let the sixteen target cells form the perimeter
\[
 T=\{(r,c)\in K:r\in\{4,8\}\ \text{or}\ c\in\{4,8\}\}.
\]
The exit is \(E=(11,11)\), the gates are
\(G=\{(10,11),(11,10)\}\), and the initial positions are
\begin{equation}
 p_0=(10,10),\qquad
 C_0=(T\setminus\{(4,4)\})\cup\{(1,1)\}.
 \label{eq:initial}
\end{equation}
Crystals are indistinguishable. A state consists of a player cell \(p\)
and a sixteen-element set \(C\), with \(p\notin C\).

\begin{figure}[tb]
\centering
\begin{tikzpicture}[x=.48cm,y=-.48cm,font=\small]
 \foreach \r in {4,...,8} {
  \foreach \c in {4,...,8} {
   \ifnum\r=4 \fill[yellow!22] (\c-1,\r-1) rectangle (\c,\r);
   \else\ifnum\r=8 \fill[yellow!22] (\c-1,\r-1) rectangle (\c,\r);
   \else\ifnum\c=4 \fill[yellow!22] (\c-1,\r-1) rectangle (\c,\r);
   \else\ifnum\c=8 \fill[yellow!22] (\c-1,\r-1) rectangle (\c,\r);
   \fi\fi\fi\fi
  }
 }
 \fill[gray!35] (10,9) rectangle (11,10);
 \fill[gray!35] (9,10) rectangle (10,11);
 \foreach \k in {0,...,11} {
  \draw[gray!65,thin] (\k,0)--(\k,11);
  \draw[gray!65,thin] (0,\k)--(11,\k);
 }
 \draw[black] (0,0) rectangle (11,11);
 \foreach \k in {1,...,11} {
  \node[font=\scriptsize] at (\k-.5,-.45) {\k};
  \node[font=\scriptsize] at (-.45,\k-.5) {\k};
 }
 \foreach \r/\c in {1/1,4/5,4/6,4/7,4/8,5/4,5/8,6/4,6/8,7/4,7/8,8/4,8/5,8/6,8/7,8/8}
  \fill[purple!75!black] (\c-.5,\r-.5) circle (.16);
 \draw[blue!65!black,thick] (9.5,9.5) circle (.29);
 \node[blue!65!black,font=\scriptsize\bfseries] at (9.5,9.5) {P};
 \node[font=\scriptsize] at (10.5,9.5) {\(\times\)};
 \node[font=\scriptsize] at (9.5,10.5) {\(\times\)};
 \node[font=\scriptsize\bfseries] at (10.5,10.5) {E};
 \begin{scope}[shift={(12,1)}]
  \fill[purple!75!black] (0,0) circle (.16);
  \node[anchor=west] at (.55,0) {Crystal};
  \fill[yellow!22] (-.3,1) rectangle (.3,1.6);
  \draw (-.3,1) rectangle (.3,1.6);
  \node[anchor=west] at (.55,1.3) {Target};
  \node[blue!65!black] at (0,2.6) {P};
  \node[anchor=west] at (.55,2.6) {Player};
  \fill[gray!35] (-.3,3.6) rectangle (.3,4.2);
  \node at (0,3.9) {\(\times\)};
  \node[anchor=west] at (.55,3.9) {Gate};
  \node at (0,5.2) {E};
  \node[anchor=west] at (.55,5.2) {Exit};
 \end{scope}
\end{tikzpicture}
\caption{The initial instance, redrawn from its mathematical specification.
The target at \((4,4)\) is empty; the displaced crystal is at \((1,1)\).
Both gates are initially closed.}
\label{fig:board}
\end{figure}

\paragraph{Commands.}
The four directions are denoted \(U,D,L,R\). The gates are open if and
only if \(C=T\); their status is not latched. In a selected
direction, scan the ray from \(p\), stopping at the board boundary or a
closed gate. If the ray contains a crystal, let \(q\) be the nearest one
and replace the state by
\[
 \bigl(q,\ (C\setminus\{q\})\cup\{p\}\bigr).
\]
If the ray contains no crystal, move the player one cell in that direction,
provided that cell is in \(V\) and is not a closed gate. Each command costs
one, including a long-distance exchange. Invalid commands can be omitted
from a shortest route; if interpreted as idle commands, deleting them
does not change any subsequent state. The objective is the player's
\emph{first entry} into \(E\).

\begin{lemma}[Goal characterization]
\label{lem:goal}
On first entry into \(E\), the crystal set is \(T\).
\end{lemma}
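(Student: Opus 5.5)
We argue by examining the single command that first places the player on \(E\). Write the state just before this command as \((p,C)\) and the state just after as \((p',C')\) with \(p'=E\). The cell \(E=(11,11)\) is a corner of the board, and its only neighbours in \(V\) are the two gates \((10,11)\) and \((11,10)\). A command can put the player on \(E\) in only two ways. The first is an ordinary unit walk from a neighbour of \(E\). The second is an exchange in which the nearest crystal on the scanned ray sits at \(E\), so \(q=E\) and the player jumps there.

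\emph{Case 1: unit walk.} Here \(p\) is a neighbour of \(E\), so \(p\in G\). Walking leaves the crystal set unchanged, so \(C'=C\). We need \(C=T\). Suppose instead \(C\neq T\). Then the gates are closed in state \((p,C)\), and we must explain how the player came to stand on a closed gate. The cleanest route is to establish an invariant: in every reachable state with \(C\neq T\), the player is not on a gate. We check it by induction over commands. A walk never enters a closed gate by definition. An exchange moves the player to a crystal cell \(q\); a scan stops at a closed gate, so \(q\) is never beyond one, but \(q\) could be the gate itself if a crystal lies there. So we strengthen the invariant: whenever the gates are closed, the cells of \(G\cup\{E\}\) hold neither crystals nor the player.

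\emph{Maintaining the strengthened invariant.} Crystals enter new cells only by exchange, landing at the player's former cell \(p\), and the player enters new cells by walking or exchanging. Consider first a state with \(C=T\). Since \(T\subset K\) is far from \(G\cup\{E\}\), the only question is whether the player already stands on \(G\cup\{E\}\). Reaching \(E\) is excluded before first entry. The player may, however, stand on a gate while the gates are open. If the next command then yields \(C'\neq T\), an exchange would leave a crystal on that gate. So the real invariant must be: no crystal ever lies in \(G\cup\{E\}\) before first entry into \(E\). This holds for three reasons. First, \(p_0\notin G\). Second, any exchange performed from a gate cell \(p\in G\) while \(C=T\) scans along row 10/11 or column 10/11, but \(T\subset K\) misses rows and columns 10 and 11. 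The ray therefore contains no crystal, and the command is a walk, leaving \(C=T\). Third, exchanges from a cell outside \(G\cup\{E\}\) deposit crystals outside it.

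\emph{Case 2: exchange.} This needs a crystal at \(E\), which the invariant forbids. It remains to verify Case 1. If \(p\in G\) at the moment of the final command, then \(p\) was reached while the gates were open, since under closed gates no walk or exchange lands on a gate (no crystal ever lies there). Since then every command has been a walk within \(G\) or along rows/columns 10–11, by the second reason above, so \(C=T\) has been preserved.

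The main obstacle is making this invariant precise and airtight. The key point is that a player standing on a gate can never trigger an exchange, because \(T\) does not meet rows or columns 10 and 11. Consequently the crystal set is frozen at \(T\) from the moment the player first steps onto a gate until the player enters \(E\).
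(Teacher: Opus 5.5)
Your proof is correct and follows essentially the paper's route: \(E\) cannot hold a crystal before the player first reaches it, so the first entry is a walk from a gate; that gate was entered by a walk while \(C=T\); and a player on a gate with \(C=T\) cannot exchange, because rows and columns 10 and 11 contain no targets. Your invariant that no crystal ever lies in \(G\cup\{E\}\) makes explicit the case of reaching a gate by exchange, which the paper's ``entering that gate requires \(C=T\)'' leaves implicit; to make it airtight, prove it by simultaneous induction with ``whenever the player is on a gate, \(C=T\)'' (this covers exchanges from a gate when \(C\ne T\), which your three listed reasons omit), taking as base case that \(C_0\) avoids \(G\cup\{E\}\) and \(p_0\notin G\).
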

\begin{proof}
Initially \(E\) has no crystal. The only operation that could place a
crystal there is an exchange from a player already at \(E\). Thus a first
entry cannot be an exchange; it is a walk from one of the two gates.
Entering that gate requires \(C=T\). From either gate, its row and column
contain no target cells. While the player remains on a gate or walks
between it and \(E\), no exchange can alter \(T\). Hence \(C=T\) on entry.
\end{proof}

\begin{definition}[Occupied-set representation]
Write \(U=C\cup\{p\}\) and \(s=(U,p)\).
Then \(|U|=17\), \(p\in U\), and \(C=U\setminus\{p\}\).
\end{definition}
An exchange changes only the active cell \(p\). A walk from \(p\) to \(q\)
replaces \(U\) by \((U\setminus\{p\})\cup\{q\}\). This is an exact
representation, not an abstraction. By Lemma~\ref{lem:goal}, the required
terminal state is
\begin{equation}
 s_*=(T\cup\{E\},E).
 \label{eq:goal}
\end{equation}

\begin{theorem}[Main result; computer-assisted]
\label{thm:main}
For the instance and rules specified above, the minimum number of commands
from \((C_0\cup\{p_0\},p_0)\) to first entry into \(E\) is exactly \(93\).
\end{theorem}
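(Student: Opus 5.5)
The proof splits into an upper bound and a lower bound. For the upper bound I will exhibit an explicit word of 93 commands in \(\{U,D,L,R\}\). I then simulate it step by step under the rules of Section~\ref{sec:model}, starting from \((C_0\cup\{p_0\},p_0)\). The check is that no command is invalid, that the player never enters \(E\) before the last command, and that the final command is a walk from a gate into \(E\). By Lemma~\ref{lem:goal}, the final state is then \(s_*\) of \eqref{eq:goal}. This part is mechanical and can be replayed by hand or by an independent short simulator.

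For the lower bound I will show that no command sequence of length at most 92 reaches \(s_*\). Because invalid commands can be deleted without changing later states, it suffices to search over valid commands only. I work in the occupied-set representation \(s=(U,p)\). This representation is exact, so a shortest path in its transition graph equals the minimum command count. I run a cost-bounded breadth-first search (an IDA\(^*\)/bounded-BFS hybrid) with threshold 92. It prunes a node at depth \(g\) whenever \(g+h(s)>92\), where \(h\) is admissible.

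The heuristic \(h\) will be built by binary cost partitioning, in three steps.
\begin{enumerate}
\item Each command is charged either to a \emph{core} component or to a \emph{residual} component. The core is an abstraction compressing positions to the status of points in \(K\) and the active cell. Its exact distances to the abstract goal are precomputed into a pattern database by backward BFS.
\item The residual component lower-bounds the unit walks still needed outside the core. The main such walk is the player's trip to \(E\) through a gate. Its bound must be \(\ell_1\)-type, e.g.\ using \(\norm{\cdot}\) distances to \(G\).
\item I add a necessary activation detour: any off-target occupied point must at some time become the active cell. I also add a corner visit condition forced by the displaced crystal at \((1,1)\).
\end{enumerate}
For each ingredient I prove that along any real transition it decreases by at most the cost allocated to it. Since each command's cost goes wholly to one component, the sum is consistent, and therefore admissible.

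The main obstacle is making this exhaustive and trustworthy at the scale involved: on the order of \(10^9\) states. I need an exact, collision-free storage scheme, e.g.\ ranking 17-subsets of the 121 cells combinatorially and storing the active cell as an index within \(U\). I also need duplicate detection across depth layers, so that no state is wrongly discarded. The remaining lemmas are local and checkable. First, the pruning rules never remove a state lying on some path of length \(\le 92\). Second, the search terminates with the goal unreached. Together these yield the bound \(\ge 93\), and with the witness, the theorem. The correctness of the code itself remains the stated trust boundary.
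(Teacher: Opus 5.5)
Your overall plan matches the paper's: an explicit witness, then bounded breadth-first search with a binary \(\alpha/\beta\) cost partition, a core pattern table, an activation detour and a northwest-corner condition. However, the residual component you propose is not a lower bound. A term built from the player's own trip to \(E\), such as its \(\ell_1\) distance to \(G\), misses two facts. Exchanges, which are charged to the core, carry the player along a whole row or column in one command. And the point that finally enters \(E\) need not be the one currently active.

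The paper's witness already refutes such a term. Command 90 is an \texttt{R} that first opens the gates with \(p=(8,11)\), so it is the restoring exchange. The nearest-crystal rule then forces the player to stand at \((8,8)\) after command 89: the vacated cell must be a target, and from any other target in row 8 the crystal at \((8,8)\) would be nearer. Only four commands remain from there. Three of them are walks and only two are \(\eta\)-changing, yet \(\norm{(8,8)-g}=5\) for both \(g\in G\). Such an \(h\) overestimates even the total remaining length, so pruning with it can discard genuine short routes and the exclusion becomes unsound.

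The missing idea is the stationary-label observation of Section~\ref{sec:partition}. Exchanges move no occupied point; only walks do, and only \(\beta\)-walks change residual coordinates. So you must bound the residual displacement of all seventeen occupied points jointly, with sixteen ending at \((0,0)\) and one at \((2,2)\), minimized over which one becomes the exit point, as in \eqref{eq:baseline}. At the state above this gives \(D_0=2\), which is exact.

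Your other two ingredients also cannot simply be added. With only two cost classes, a separate corner-visit term would recharge \(\alpha\)-commands already counted by the core table. Separate detour terms would recharge \(\beta\)-walks, since one walk can serve several activations. The paper instead folds the corner condition into the core distance as the landmark value \(d_N\) of \eqref{eq:landmark}, used only when \(U\cap N\neq\varnothing\). It folds the detour into the residual bound through the joint minimum \eqref{eq:activation} over the exit assignment and the approaching label, which may itself be the exit label. It then takes a maximum, not a sum, over obligations.

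You also do not need consistency. Proving it for the detour term would require showing that an obligation cannot vanish at an exchange. The paper proves only \(H_\alpha\le\alpha\) and \(H_\beta\le\beta\) for every completion. Breadth-first layering, in which every duplicate is met at equal or smaller depth, then yields Theorem~\ref{thm:exclusion}. Finally, the upper bound requires actually producing the 93-command route; the paper gives it in Table~\ref{tab:route}.
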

The upper bound is explicit in Section~\ref{sec:result}. The lower bound
combines the mathematical reductions below with the exhaustive
computation reported there.

\section{Partitioning command costs}
\label{sec:partition}
For a one-dimensional coordinate \(k\in\{1,\ldots,11\}\), define
\begin{equation}
 \pi(k)=\min\{6,\max\{0,k-3\}\},\qquad
 \eta(k)=
 \begin{cases}
  k-3,&k<3,\\
  0,&3\le k\le9,\\
  k-9,&k>9.
 \end{cases}
 \label{eq:coordinates}
\end{equation}
Apply both maps coordinatewise to cells. The map \(\pi\) compresses the
board to \(Q=\{0,\ldots,6\}^2\); \(\eta\) records the signed residual
displacement outside the interval \([3,9]\). In one dimension,
\(k=\pi(k)+\eta(k)+3\).

\begin{table}[tb]
\centering
\small
\setlength{\tabcolsep}{7pt}
\begin{tabular}{c*{11}{r}}
\toprule
\(k\)&1&2&3&4&5&6&7&8&9&10&11\\
\midrule
\(\pi(k)\)&0&0&0&1&2&3&4&5&6&6&6\\
\(\eta(k)\)&\(-2\)&\(-1\)&0&0&0&0&0&0&0&1&2\\
\bottomrule
\end{tabular}
\caption{Every adjacent pair changes exactly one of \(\pi\) and \(\eta\)
by one. The identity \(k=\pi(k)+\eta(k)+3\) alone would not establish
this property; the adjacent values do.}
\label{tab:projection}
\end{table}

\begin{lemma}[Disjoint cost partition]
\label{lem:partition}
A unit walk changes exactly one coordinate of either \(\pi(p)\) or
\(\eta(p)\), by one. Charge every exchange and every \(\pi\)-changing walk
to cost \(\alpha\), and every \(\eta\)-changing walk to cost \(\beta\).
For every route of length \(L\), \(L=\alpha+\beta\).
\end{lemma}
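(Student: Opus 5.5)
The plan is to reduce the statement to a one-dimensional check on adjacent coordinates, and then get the identity \(L=\alpha+\beta\) by counting. A unit walk changes exactly one coordinate of the player cell \(p\), by \(\pm1\). If the row changes, the column is untouched, and so \(\pi\) and \(\eta\) of the column are unchanged; the same holds with rows and columns exchanged. It therefore suffices to prove the following. For every adjacent pair \(k,k+1\) in \(\{1,\ldots,11\}\), exactly one of \(|\pi(k+1)-\pi(k)|\) and \(|\eta(k+1)-\eta(k)|\) equals \(1\), and the other equals \(0\).

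I would verify this one-dimensional claim by cases on \(k\), using the explicit formulas in \eqref{eq:coordinates}, which Table~\ref{tab:projection} simply lists.
\begin{itemize}
\item For \(k\in\{1,2\}\): \(\pi\) stays at \(0\) and \(\eta\) increases by one, from \(-2\) to \(-1\) to \(0\).
\item For \(3\le k\le 8\): \(\eta\) stays \(0\) and \(\pi(k)=k-3\) increases by one.
\item For \(k\in\{9,10\}\): \(\pi\) stays \(6\) and \(\eta\) increases by one.
\end{itemize}
The boundary pairs \((2,3)\), \((3,4)\), \((8,9)\) and \((9,10)\) are where an error could hide. I would check each of them directly: \(\eta(2)=-1\), \(\eta(3)=0\), \(\pi(2)=\pi(3)=0\); \(\pi(9)=6\), \(\pi(8)=5\); and \(\eta(9)=0\), \(\eta(10)=1\). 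As the caption warns, the identity \(k=\pi(k)+\eta(k)+3\) alone would not suffice. It only forces the two increments to sum to \(1\), not that one of them vanishes, so the case check is genuinely needed.

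Given the one-dimensional fact, each unit walk changes exactly one of the four scalar coordinates of \((\pi(p),\eta(p))\), by one. Hence each walk falls into exactly one of the two classes, \(\pi\)-changing or \(\eta\)-changing. Exchanges form a third class, disjoint from walks, and are all charged to \(\alpha\). Every command in a route is therefore charged exactly once, with unit weight.

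Summing over the \(L\) commands gives \(L=\alpha+\beta\). Invalid or idle commands have already been excluded from routes in the model, so they do not affect the count. The main obstacle is only the boundary bookkeeping in the case check; the rest is immediate.
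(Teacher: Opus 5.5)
Your proof is correct and is essentially the paper's argument. The paper likewise classifies the ten one-dimensional edges: \(3\leftrightarrow4,\ldots,8\leftrightarrow9\) change only \(\pi\), and the remaining four change only \(\eta\). It then uses that a grid walk changes a single original coordinate, and charges every exchange to \(\alpha\) by definition. Your explicit boundary checks, and your remark that \(k=\pi(k)+\eta(k)+3\) only forces the increments to sum to one, spell out what the paper delegates to Table~\ref{tab:projection} and its caption.
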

\begin{proof}
In one dimension the edges \(3\leftrightarrow4,\ldots,8\leftrightarrow9\)
change only \(\pi\); the remaining four edges change only \(\eta\).
A grid walk changes only one original coordinate. The asserted partition
therefore assigns each walk to exactly one class, and exchanges are
assigned to \(\alpha\) by definition.
\end{proof}

\paragraph{Stationary labels under exchange.}
Temporarily label the seventeen occupied points. A label stays at its
cell during an exchange; only which label is active changes. During a walk,
the active label moves with the player. These are bookkeeping labels of
occupied points, not persistent identities of crystals. Consequently,
only walks move labels, and only \(\beta\)-walks change their residual
coordinates. At completion, sixteen labels have residual coordinate
\((0,0)\), and one has residual coordinate \((2,2)\). This observation
permits a displacement lower bound on \(\beta\), even though the
player itself can change residual coordinates in a single exchange.

\section{A compressed-core lower bound}
\label{sec:core}
Let \(Q^\circ=\{1,\ldots,5\}^2\) be the interior of \(Q\).
The abstraction of \(s=(U,p)\) is
\[
 A(s)=(M,a,e),\qquad
 M=\pi(U\cap K),\quad a=\pi(p),\quad e=\pi(E)=(6,6).
\]
The endpoint corner \(e\) is stored explicitly because the pattern table
uses all square symmetries. The integer \(n_{\mathrm{out}}=17-|M|\)
counts occupied points outside \(K\), including the player
when appropriate. Their individual positions are forgotten. Valid states
satisfy \(|M|\le17\), \(a\in M\) when \(a\in Q^\circ\), and \(n_{\mathrm{out}}\ge1\)
when \(a\in\partial Q\).

\subsection{Abstract transitions}
The following directed transitions have unit cost. Gates are ignored.
Rays exclude their starting cell.
\begin{enumerate}
 \item \emph{Exchanges from the interior.}
 If \(a\in Q^\circ\), an exchange may reach the first point of \(M\)
 on a chosen ray. If there is none, it may reach the boundary cell on
 that ray, provided \(n_{\mathrm{out}}\ge1\).
 \item \emph{Exchanges from the boundary.}
 If \(a\in\partial Q\), an exchange may reach the first point of \(M\)
 on a ray. It may also reach any other boundary cell on that ray before
 a point of \(M\), provided \(n_{\mathrm{out}}\ge2\).
 Exchanges preserve \(M\).
 \item \emph{Walks.}
 The active cell may move to an adjacent \(b\in Q\) if the ray from
 \(a\) in that direction contains no point of \(M\). Remove \(a\) from
 \(M\) if \(a\in Q^\circ\), and add \(b\) if \(b\in Q^\circ\).
 Only valid resulting states are retained.
\end{enumerate}
These rules permit configurations of exterior points that need not be
simultaneously realizable on the original board. That extra freedom is
intentional. Exchanges whose endpoints have the same projection need no
abstract transition.

\begin{lemma}[Simulation]
\label{lem:simulation}
Every real \(\alpha\)-command maps to one abstract edge or no change.
Every real \(\beta\)-command maps to no change.
\end{lemma}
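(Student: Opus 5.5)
We prove Lemma~\ref{lem:simulation} by case analysis on the real command type, computing how each command changes the triple \(A(s)=(M,a,e)\). The coordinate \(e\) never changes, so only \(M=\pi(U\cap K)\) and \(a=\pi(p)\) matter. We first record two facts about \(\pi\) and \(K\). A cell \(x\) lies in \(K\) exactly when \(\pi(x)\in Q^\circ\). The map \(\pi\) is injective on \(K\), and monotone (non-strictly) along rows and columns. Hence \(M\) is in bijection with \(U\cap K\), and \(n_{\mathrm{out}}=17-|M|\) equals \(|U\setminus K|\).

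\emph{\(\beta\)-walks.} By Lemma~\ref{lem:partition}, such a walk changes only \(\eta(p)\), so \(\pi(p)\) is fixed and \(a\) is unchanged. Every edge that changes \(\eta\) joins two cells outside \([4,8]\) in the moved coordinate, so both endpoints lie outside \(K\). Therefore \(U\cap K\) is unchanged, and so is \(M\).

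\emph{\(\pi\)-walks.} The player moves from \(p\) to an adjacent cell \(q\) with \(\pi(q)\) adjacent to \(\pi(p)\). In the occupied-set representation, \(U\) becomes \((U\setminus\{p\})\cup\{q\}\). Thus \(M\) loses \(\pi(p)\) exactly when \(p\in K\), that is, when \(a\in Q^\circ\), and gains \(\pi(q)\) exactly when \(\pi(q)\in Q^\circ\). This is the update in rule~3. The walk is legal only if the real ray from \(p\) contains no crystal. Any point of \(M\) on the abstract ray from \(a\) would be the projection of a cell of \(U\cap K\setminus\{p\}\) on the real ray, by monotonicity of \(\pi\) in the walking direction and equality in the other coordinate. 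So the abstract ray contains no point of \(M\). Validity of the resulting abstract state follows because it is the image of a real state.

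\emph{Exchanges.} The player swaps with the nearest crystal \(q\) on a ray, \(U\) is unchanged, and \(a\) becomes \(\pi(q)\). If \(\pi(q)=\pi(p)\), there is no abstract change. Otherwise we have four subcases.

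\begin{itemize}
 \item[(i)] \(a\in Q^\circ\) and \(q\in K\). Cells of \(K\) strictly between \(p\) and \(q\) hold no crystal. By injectivity and monotonicity of \(\pi\) on \(K\), \(\pi(q)\) is the first point of \(M\) on the abstract ray.
 \item[(ii)] \(a\in Q^\circ\) and \(q\notin K\). There is no point of \(M\) on the abstract ray, and \(\pi(q)\) is its boundary cell. The condition \(n_{\mathrm{out}}\ge1\) holds because \(q\in U\setminus K\).
 \item[(iii)] \(a\in\partial Q\) and \(q\in K\). The real ray can pass only through cells outside \(K\) before \(q\). So \(\pi(q)\) is the first \(M\)-point on the abstract ray; it is not hidden behind another point of \(M\), since the exchange targets the nearest crystal.
 \item[(iv)] \(a\in\partial Q\) and \(q\notin K\). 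Then \(\pi(q)\in\partial Q\) lies on the abstract ray before any \(M\)-point, since cells of \(K\) before \(q\) are empty. Both \(p\) and \(q\) lie outside \(K\), giving \(n_{\mathrm{out}}\ge2\).
\end{itemize}

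The main obstacle is the monotonicity and collinearity step. A real ray along, say, row \(r\) projects onto the abstract ray in row \(\pi(r)\), but it may run along a boundary row of \(Q\) while several real rows collapse to the same value \(\pi(r)\). We will handle this by checking that points of \(M\) have interior projections, so they can appear on an abstract ray only from genuinely collinear cells of \(K\). Boundary rays of \(Q\) meet \(M\) only where the real ray enters \(K\), and in that case the real row lies in \([4,8]\). This argument also confirms the remark that gates, which lie outside \(K\), can safely be ignored: ignoring them only enlarges the abstract move set.
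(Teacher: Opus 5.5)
Your case analysis is the same as the paper's: exchanges split by whether the endpoints lie in $K$, with the nearest-crystal property and $n_{\mathrm{out}}=|U\setminus K|$ licensing the abstract destinations; $\pi$-walks handled by collinearity and order preservation under $\pi$; and $\eta$-walks fixing both $a$ and $M$. You give more detail than the paper, but the one step that needs care, the gate step, is justified backwards.

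\textbf{The gate step.} In the $\pi$-walk case, a real walk is legal when the ray \emph{up to the first closed gate} has no crystal. The abstract rule instead requires the whole abstract ray to contain no point of $M$. Checking a longer ray is a stricter condition. So ignoring gates does not ``only enlarge the abstract move set''. In general, a crystal of $K$ hidden behind a closed gate would make a real walk legal while blocking the abstract one.

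Your collinearity argument actually needs this: every occupied cell of $K$ on the geometric ray from $p$ lies on the scanned part of the ray. Equivalently, no cell of $K$ is hidden behind a closed gate. That is true. A ray through $(10,11)$ or $(11,10)$ either stops there at the board edge or continues within column $11$ or row $11$. So the hidden cells are gate cells and exit-side boundary cells, none in $K$. This is precisely the sentence the paper's proof supplies.

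Your own observation that the gates lie outside $K$ can also be made to work, but only with an extra geometric step. A $\pi$-walk whose abstract ray meets $Q^\circ$ either starts in $K$ or steps directly into $K$. Hence the $K$-cells on its ray form an initial segment, and no gate outside $K$ can precede them.

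\textbf{Case (iii).} The sentence ``the real ray can pass only through cells outside $K$ before $q$'' is false. From $p=(6,1)$ moving right to a crystal at $q=(6,6)$, the ray crosses the empty cells $(6,4),(6,5)\in K$. What you need is that no \emph{occupied} cell of $K$ precedes $q$, which your next clause correctly invokes. With that wording the case goes through.
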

\begin{proof}
A real exchange cannot pass a nearer interior crystal. If it reaches an
exterior point, the corresponding boundary destination is allowed by
the occupied-point count: an exterior-to-exterior exchange uses two
distinct exterior points. Projection preserves row or column alignment
and order along the noncollapsed part of a ray. Thus every exchange
with distinct projected endpoints is included.

A real walk is allowed only when its entire unblocked ray has no
crystal. The only cells hidden behind closed gates are gate cells and
exit-side boundary cells; none belongs to \(K\). Hence there is no
interior occupied point ahead of a legal walk. A \(\pi\)-changing walk
satisfies the abstract walk rule, with precisely the stated
update of \(M\). An \(\eta\)-changing walk stays outside \(K\), does not
change \(\pi(p)\), and leaves \(M\) fixed.
\end{proof}

Let \(T'=\pi(T)\), and define the base pattern distance
\[
 d(A)=\dist_{\mathcal A}\bigl(A,(T',e,e)\bigr),
\]
where \(\mathcal A\) is the directed abstract graph. Reverse
breadth-first search from the abstract goal computes \(d\).
Lemma~\ref{lem:simulation} gives \(d(A(s))\le\alpha\) for every
completion of \(s\).

\subsection{A necessary northwest visit}
\label{sec:landmark}
Let \(N=\{1,2,3\}^2\), whose projection is the northwest corner \(o=(0,0)\).
If \(U\cap N\ne\varnothing\), at least one currently occupied point in
\(N\) must eventually move: neither a target nor the exit lies there.
Before its first walk, that point must be active. Thus the projected
player must visit \(o\), with a visit at the current time permitted.

For an abstract state \(A=(M,a,e)\), let \(o(e)\) be the corner opposite
\(e\), and define
\begin{equation}
 d_N(A)=
 \min_{\substack{B=(M',o(e),e)\\ B\text{ valid}}}
 \bigl[\dist_{\mathcal A}(A,B)+d(B)\bigr].
 \label{eq:landmark}
\end{equation}
This value is computed by reverse Dijkstra search, seeding each such
\(B\) with its base distance \(d(B)\). Set
\begin{equation}
 H_\alpha(s)=
 \begin{cases}
 d_N(A(s)),&U\cap N\ne\varnothing,\\
 d(A(s)),&U\cap N=\varnothing.
 \end{cases}
 \label{eq:halpha}
\end{equation}

\begin{proposition}
\label{prop:halpha}
For every completion of \(s\), \(H_\alpha(s)\le\alpha\).
\end{proposition}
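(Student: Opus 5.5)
The plan is to take an arbitrary completion of \(s\), that is, a real route from \(s\) to \(s_*\), and split into the two cases of \eqref{eq:halpha}. In both cases the route is projected through \(A\) using Lemma~\ref{lem:simulation}. Each real \(\alpha\)-command becomes one abstract edge or no change, and each \(\beta\)-command becomes no change. The projected sequence is therefore an abstract walk in \(\mathcal A\) from \(A(s)\) to \(A(s_*)\) with at most \(\alpha\) edges. I must check that \(A(s_*)=(T',e,e)\). This holds because \(s_*=(T\cup\{E\},E)\), \(E\notin K\), and \(\pi(E)=e\). I must also check that every intermediate projected state is valid, which holds because it is the image of a real state. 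The case \(U\cap N=\varnothing\) then gives \(d(A(s))\le\alpha\) directly.

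For the case \(U\cap N\ne\varnothing\), I will make the landmark argument of Section~\ref{sec:landmark} precise using the stationary-label bookkeeping. Fix a label \(\ell\) occupying a cell of \(N\) in \(s\). At completion every label sits in \(T\cup\{E\}\), which is disjoint from \(N\). So \(\ell\) must be moved at some point, and only walks move labels, carrying the active label. Let \(t\) be the first time at which \(\ell\) is active; \(t\) may be time \(0\). Up to time \(t\), \(\ell\) has not walked, so it still lies in \(N\) and the player's cell is in \(N\). Hence \(\pi(p)=o\). Since \(e=(6,6)\), we have \(o=o(e)\), so the projected state at time \(t\) has the form \(B=(M',o(e),e)\), and it is valid.

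Split the route at \(t\). Let \(\alpha_1\) and \(\alpha_2\) be the \(\alpha\)-costs before and after \(t\), so that \(\alpha=\alpha_1+\alpha_2\) by Lemma~\ref{lem:partition}. Applying Lemma~\ref{lem:simulation} to each segment gives \(\dist_{\mathcal A}(A(s),B)\le\alpha_1\) and \(d(B)\le\alpha_2\). The second bound is the first case applied to the real state at time \(t\), viewed as the start of its own completion. Summing and taking the minimum in \eqref{eq:landmark} yields \(d_N(A(s))\le\alpha\).

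I expect the main obstacle to be confirming that the reverse Dijkstra computation really computes \eqref{eq:landmark}. Seeding each \(B\) with \(d(B)\) and relaxing reverse unit edges gives exactly \(\min_B[\dist(A,B)+d(B)]\), because all the costs involved are nonnegative. A smaller point is that the minimisation ranges only over valid \(B\); the real projection is valid, so this is harmless. A second delicate point is the symmetry bookkeeping: \(o(e)\) must transform together with \(e\) under the square symmetries used by the table. In the untransformed instance \(o(e)=(0,0)=\pi(N)\), and the symmetric images follow by equivariance of both \(\mathcal A\) and the definition of \(d_N\).
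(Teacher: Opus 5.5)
Your proof is correct and follows the paper's argument. You project the completion through Lemma~\ref{lem:simulation}. When \(U\cap N\ne\varnothing\), you split the projected route at a visit of the player to \(o=o(e)\), bounding the prefix by \(\dist_{\mathcal A}(A(s),B)\) and the suffix by \(d(B)\). You also make explicit several points the paper leaves implicit: the split is taken at the first activation of a label in \(N\), the projected states are valid, and \(A(s_*)=(T',e,e)\).
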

\begin{proof}
Without the northwest condition this is the simulation lemma. With that
condition, split the projected route at an occurrence of \(o\).
The abstract prefix and suffix have total length at most the real
\(\alpha\)-cost and are candidates in \eqref{eq:landmark}.
\end{proof}

\subsection{Symmetry and pattern-table representation}
The group \(D_4\) acts on the abstract board. We transform \(M,a,e\)
together; \(o(e)\) transforms with \(e\). Canonicalizing the mask alone
without retaining goal orientation would not justify the distance lookup.
After choosing a representative mask, the implementation minimizes the
pair \((a,e)\) over the mask's stabilizer.

There are 4,211,744 orbits of all \(2^{25}\) masks under \(D_4\).
Each mask representative has \(49\cdot4=196\) addressable player/corner
slots, so each byte-valued distance array contains 825,501,824 slots.
The arrays include unused and noncanonical addresses. Each generated
array has 584,039,298 entries different from the sentinel 255.
These are storage counts, not counts of all real positions.
For the bound-93 computation, sentinel 255 is sufficient to reject a
state: all abstract paths of cost at most 92 are represented by the
reverse searches. Values at or beyond the sentinel need not be
distinguished. The initial core lower bound used in the search is 65.

\section{Residual motion and activation detours}
\label{sec:residual}
Fix labels \(1,\ldots,17\) on the occupied points of the current state.
Write \(r_i=\eta(u_i)\), and let \(i_p\) be the active label.
For each possible exit-assigned label \(e\), put
\[
 z_i^{(e)}=
 \begin{cases}(2,2),&i=e,\\(0,0),&i\ne e,\end{cases}
 \qquad D_e=\sum_{i=1}^{17}\norm{r_i-z_i^{(e)}}.
\]
Every completion induces one such assignment. Since each \(\beta\)-walk
moves exactly one residual coordinate by one,
\begin{equation}
 D_0=\min_e D_e
 =\sum_i\norm{r_i}+
   \min_e\bigl(\norm{r_e-(2,2)}-\norm{r_e}\bigr)
 \label{eq:baseline}
\end{equation}
is a lower bound on its residual-motion cost.

\subsection{One necessary activation}
Consider an inactive label \(v\) with \(r_v\ne(0,0)\) and \(u_v\ne E\).
It is currently on neither a target nor the exit and must eventually
move. Before its first walk, some distinct active label \(j\) must exchange
with it. Until that first activation, \(u_v\) remains fixed. Just before
the activating exchange, \(u_j\) and \(u_v\) share a row or column.
In particular, one residual coordinate of \(j\) equals the corresponding
coordinate of \(r_v\).

For a possible terminal residual coordinate \(z\), define the excess
length forced by this necessary alignment:
\begin{equation}
 \delta_j(z;v)=
 \min_{\ell\in\{1,2\}}
 \left(
 |r_{j,\ell}-r_{v,\ell}|+
 |r_{v,\ell}-z_\ell|-
 |r_{j,\ell}-z_\ell|
 \right).
 \label{eq:detour}
\end{equation}
It is nonnegative by the triangle inequality. Alignment in residual
coordinates is only a necessary condition for real alignment; weakening
the condition can only decrease this bound.
Allowing every possible exit assignment and approaching label gives
\begin{equation}
 D_v=\min_{\substack{e\in\{1,\ldots,17\}\\j\ne v}}
       \left[D_e+\delta_j\bigl(z_j^{(e)};v\bigr)\right].
 \label{eq:activation}
\end{equation}
In particular, the approaching label is allowed to be the exit-assigned
label. Excluding that possibility would not be justified.
Define
\[
 \mathcal V(s)=\{v:v\ne i_p,\ r_v\ne(0,0),\ u_v\ne E\}
\]
and
\begin{equation}
 H_\beta(s)=\max\bigl(\{D_0\}\cup\{D_v:v\in\mathcal V(s)\}\bigr).
 \label{eq:hbeta}
\end{equation}

\begin{proposition}[Residual lower bound]
\label{prop:hbeta}
For every completion of \(s\), \(H_\beta(s)\le\beta\).
\end{proposition}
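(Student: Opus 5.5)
We prove separately that $D_0\le\beta$ and that $D_v\le\beta$ for each $v\in\mathcal V(s)$; the maximum in \eqref{eq:hbeta} is then at most $\beta$. Fix a completion of $s$ and the labels $1,\ldots,17$ on the current occupied points. By the stationary-label observation of Section~\ref{sec:partition}, each label moves only when active and walking, and only $\beta$-walks change its residual coordinate $r_i$. Each $\beta$-walk changes exactly one coordinate of one label's residual by one (Lemma~\ref{lem:partition}). Let $\beta_i$ be the number of $\beta$-walks performed while label $i$ is active; then $\beta=\sum_i\beta_i$. At completion, by Lemma~\ref{lem:goal} and \eqref{eq:goal}, exactly one label $e$ sits at $E$ with residual $(2,2)$, and every other label sits in $T\subset K$ with residual $(0,0)$. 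So the completion determines an exit assignment $e$ with terminal residuals $z_i^{(e)}$. Each label's residual trajectory is a lattice path of length $\beta_i$ from $r_i$ to $z_i^{(e)}$, so $\beta_i\ge\norm{r_i-z_i^{(e)}}$. Summing gives $\beta\ge D_e\ge D_0$. The closed form in \eqref{eq:baseline} is just the rewriting $D_e=\sum_i\norm{r_i}-\norm{r_e}+\norm{r_e-(2,2)}$.

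For $D_v$ with $v\in\mathcal V(s)$, the label $v$ is inactive, and $u_v$ is neither a target nor $E$. The cell $u_v$ cannot be a target, since $r_v\ne(0,0)$ places it outside $[3,9]^2\supset K$. Hence $v$ must walk at some later time, and its first walk must be preceded by an exchange making it active. Let $t$ be the time of that activating exchange and $j\ne v$ the label active just before it. At time $t$ the labels $j$ and $v$ are aligned in some coordinate $\ell$. Because $u_v$ is unchanged up to $t$ and $\eta$ is applied coordinatewise, the $\ell$-th residual coordinate of $j$ at time $t$ equals $r_{v,\ell}$. Label $j$'s residual trajectory therefore passes through a point whose $\ell$-th coordinate is $r_{v,\ell}$. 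Projecting that trajectory onto coordinate $\ell$ gives a one-dimensional path from $r_{j,\ell}$ through $r_{v,\ell}$ to $z_{j,\ell}$, where $z=z_j^{(e)}$. Its length is at least $|r_{j,\ell}-r_{v,\ell}|+|r_{v,\ell}-z_\ell|$. The other coordinate still needs at least $|r_{j,m}-z_m|$. Hence
\[
 \beta_j\ \ge\ \norm{r_j-z_j^{(e)}}+\delta_j\bigl(z_j^{(e)};v\bigr),
\]
where taking the minimum over $\ell$ in \eqref{eq:detour} only weakens the bound, so it remains valid. Combining this with the bounds $\beta_i\ge\norm{r_i-z_i^{(e)}}$ for $i\ne j$ gives $\beta\ge D_e+\delta_j(z_j^{(e)};v)\ge D_v$. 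The minimum in \eqref{eq:activation} ranges over all $e$ and all $j\ne v$, including $j=e$, so it covers whatever pair the completion actually realizes.

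The main obstacle is making the alignment step rigorous. First, the activating exchange may be between two exterior points, or may have been blocked differently, but only the real alignment of $u_j$ and $u_v$ in a row or column is used, and that alignment holds for any exchange. Second, we need that $u_v$, and hence $r_v$, is fixed until time $t$: labels move only when active. Third, the bound must count $j$'s trajectory including all its activity both before and after $t$, and no $\beta$-walk may be double-counted between labels. This holds because each $\beta$-walk is attributed to the unique active label, so the per-label path-length bounds add.
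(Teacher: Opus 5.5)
Your proposal is correct and follows essentially the same argument as the paper. You fix the realized exit assignment and the label that first activates $v$, bound each label's residual path by its Manhattan distance (plus the alignment detour for the activator), and sum over labels. Your explicit decomposition $\beta=\sum_i\beta_i$, attributing each $\beta$-walk to the unique active label, makes precise the paper's step of ``summing these disjoint label motions.''
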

\begin{proof}
Fix the actual exit assignment \(e\). For any \(v\in\mathcal V(s)\),
let \(j\ne v\) be the label that first activates it. The residual path
of \(j\), from \(r_j\) to \(z_j^{(e)}\), passes through a point with at
least one coordinate equal to that of \(r_v\). Its length is therefore
at least its direct Manhattan distance plus
\(\delta_j(z_j^{(e)};v)\). Every other label requires at least its direct
distance. Summing these disjoint label motions gives the expression
inside \eqref{eq:activation} for this particular \(e,j\), and hence
\(\beta\ge D_v\). The same reasoning without a detour gives
\(\beta\ge D_0\). Taking the maximum proves the claim.
\end{proof}

The maximum in \eqref{eq:hbeta} is essential: one motion may help
activate several points, so their detour bounds cannot in general be
added. The implementation computes the strongest two obligations and
the cell identifying the strongest one. This permits the active label's
obligation to be omitted for any choice of \(p\) in the same \(U\).

\begin{lemma}[Safe reuse along walks]
\label{lem:lipschitz}
For a legal walk \(s\to s'\), the residual bound is unchanged if the
walk has \(\alpha\)-cost. If the walk has \(\beta\)-cost, then
\(\lvert H_\beta(s')-H_\beta(s)\rvert\le1\).
\end{lemma}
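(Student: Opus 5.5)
The plan is to treat \(H_\beta\) as a function of the labelled residual vector \((r_1,\ldots,r_{17})\), the active label \(i_p\), and the set of labels sitting at \(E\). A walk changes only the active label's cell, so it changes only \(r_{i_p}\). Exchanges do not enter the lemma. For an \(\alpha\)-walk, Lemma~\ref{lem:partition} says \(\eta(p)\) is unchanged. Hence every \(r_i\) is unchanged, including the active one, since the active label moves with the player. The active label also stays the same, because a walk does not change which label is active. It then remains to check that \(\mathcal V(s)\) is unchanged. Membership requires \(v\ne i_p\), and for inactive labels neither \(r_v\) nor \(u_v\) changed. Therefore \(D_0\), every \(D_e\), every \(\delta_j\), and every \(D_v\) coincide at \(s\) and \(s'\), so \(H_\beta(s')=H_\beta(s)\).

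For a \(\beta\)-walk, Lemma~\ref{lem:partition} gives that exactly one coordinate of \(r_{i_p}\) changes, by one, and all other \(r_i\) are fixed. Again \(i_p\) is fixed, and so is \(\mathcal V\). The active label is excluded from \(\mathcal V\), and inactive labels neither move nor change their relation to \(E\). The lemma is then proved termwise. I will show that each quantity inside the max in \eqref{eq:hbeta} changes by at most one. Since a maximum of functions that are \(1\)-Lipschitz on a common index set is \(1\)-Lipschitz, the claim follows.

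The termwise bounds are as follows.
\begin{itemize}
\item \(D_e\) is a sum of terms \(\norm{r_i-z_i^{(e)}}\). Only the \(i=i_p\) term changes, and it moves by at most one under a unit change of \(r_{i_p}\) (triangle inequality). Thus \(D_0=\min_e D_e\) changes by at most one.
\item For \(D_v\), the bracket \(D_e+\delta_j(z_j^{(e)};v)\) has to be handled in two cases.
\item If \(j\ne i_p\), then \(\delta_j\) is unchanged and only \(D_e\) moves, by at most one.
\item If \(j=i_p\), then \(r_j\) changes in one coordinate by one. Here I will rewrite the combination instead of bounding the two pieces separately. The \(j\)-term of \(D_e\) is \(\norm{r_j-z_j}\). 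Adding \(\delta_j\) replaces the \(\ell\)-th coordinate contribution \(|r_{j,\ell}-z_\ell|\) by \(|r_{j,\ell}-r_{v,\ell}|+|r_{v,\ell}-z_\ell|\).
\end{itemize}
So the bracket equals the sum over \(i\ne j\) of the direct distances plus
\[
\min_{\ell}\Bigl(|r_{j,\ell}-r_{v,\ell}|+|r_{v,\ell}-z_\ell|+|r_{j,3-\ell}-z_{3-\ell}|\Bigr).
\]
Every expression inside this min is \(1\)-Lipschitz in \(r_j\) for the \(\ell^1\) norm, since \(r_j\) enters only through single absolute values, one per coordinate. A min of Lipschitz functions is Lipschitz, so the bracket changes by at most one. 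Taking the min over \(e\) and \(j\) preserves this, and hence each \(D_v\) moves by at most one.

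The main obstacle is this \(j=i_p\) case. Bounding \(D_e\) and \(\delta_j\) separately would only yield a change of \(2\), because \(\delta_j\) by itself is not \(1\)-Lipschitz. The rewriting above shows that the cancellation of \(-|r_{j,\ell}-z_\ell|\) against the \(j\)-term of \(D_e\) is what makes the estimate work. A secondary point is that \(\mathcal V(s')=\mathcal V(s)\) requires that no inactive label changes its \(u_v\ne E\) status. This holds because only the active label moves.
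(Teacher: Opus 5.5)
Your proof is correct and follows the paper's argument. Like the paper, you note that the active label and \(\mathcal V\) are unchanged and that only \(r_{i_p}\) moves. You then bound each term of the max, and your rewriting of \(\norm{r_j-z_j}+\delta_j\) as a single path through a fixed coordinate is the paper's ``sum of path lengths'' observation, made explicit together with the cancellation it depends on.
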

\begin{proof}
The active label remains the same and all inactive labels remain fixed.
Thus the candidate set \(\mathcal V\) and the activation coordinates
are unchanged. For fixed \(e,j,v\) and choice of alignment coordinate,
the expression \(D_e+\delta_j(z_j^{(e)};v)\) is a sum of path lengths.
If the active label is \(j\), its contribution is a path through a
fixed coordinate; otherwise its contribution is a direct distance.
Changing its residual start coordinate by one changes either length
by at most one. If its residual coordinate does not change, neither
does the expression. Minima and maxima preserve these properties.
The same argument applies to \(D_0\).
\end{proof}

\begin{corollary}[Admissible combined heuristic]
\label{cor:heuristic}
The function \(h(s)=H_\alpha(s)+H_\beta(s)\) is a lower bound on the
length of every completion of \(s\).
\end{corollary}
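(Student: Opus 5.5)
The plan is to fix an arbitrary completion of \(s\), meaning a legal command sequence from \(s\) that ends with the player's first entry into \(E\), and show that its length \(L\) is at least \(H_\alpha(s)+H_\beta(s)\). Lemma~\ref{lem:partition} applies to this completion, since it is a route of length \(L\), and gives \(L=\alpha+\beta\). Here \(\alpha\) counts the exchanges and \(\pi\)-changing walks of the completion, and \(\beta\) counts its \(\eta\)-changing walks. Each command falls in exactly one class, so neither count double-counts anything.

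Next, I will apply the two component bounds to this same completion. Proposition~\ref{prop:halpha} gives \(H_\alpha(s)\le\alpha\), and Proposition~\ref{prop:hbeta} gives \(H_\beta(s)\le\beta\). Both propositions are stated for every completion of \(s\), with the costs measured on that completion. I will check that \(\alpha\) and \(\beta\) in these propositions are the same quantities as in Lemma~\ref{lem:partition}. For \(H_\alpha\), Lemma~\ref{lem:simulation} maps \(\beta\)-commands to no abstract change, so the abstract path length is bounded by the \(\alpha\)-count alone. For \(H_\beta\), only \(\beta\)-walks move residual coordinates of labels, so the residual displacement is bounded by the \(\beta\)-count alone. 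Both use the terminal state \(s_*\) from Lemma~\ref{lem:goal}.

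Adding the two inequalities gives \(h(s)=H_\alpha(s)+H_\beta(s)\le\alpha+\beta=L\). Since the completion was arbitrary, \(h\) is a lower bound on every completion, which is the claim.

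The main point needing care, though not a real obstacle, is that the two bounds draw on disjoint resources of a single route. The binary partition guarantees this: no command is counted in both \(\alpha\) and \(\beta\), so the bounds add rather than merely giving \(\max\). I would also note that the maximum taken inside \(H_\beta\) does not interact with \(H_\alpha\). That maximum is already justified within Proposition~\ref{prop:hbeta} for the fixed completion.
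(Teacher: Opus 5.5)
Your proposal is correct and is essentially the paper's proof: fix one completion, apply Propositions~\ref{prop:halpha} and \ref{prop:hbeta} to that \emph{same} completion, and add the two bounds using \(L=\alpha+\beta\) from Lemma~\ref{lem:partition}. Your extra check that both propositions use the same disjoint cost classes is the point the paper stresses by italicizing ``same.''
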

\begin{proof}
Apply Propositions~\ref{prop:halpha} and \ref{prop:hbeta} to the
\emph{same} completion and use \(L=\alpha+\beta\).
\end{proof}
The search uses an initial value \(h(s_0)=65+10=75\).
A separate analytic argument proves that every complete route from
the initial state has \(\beta\ge12\)
(Appendix~\ref{app:elementary}); that stronger initial-state statement
was not used as a statewise residual heuristic in the reported search.

\section{Exact bounded search}
\label{sec:search}
Fix a strict bound \(B=93\). Breadth-first search processes states in
nondecreasing prefix length \(g\), enumerating all four legal commands.
A successor at depth \(g+1\) is discarded when
\begin{equation}
 (g+1)+h(s')\ge B.
 \label{eq:prune}
\end{equation}
Previously visited states need not be inserted again. The mathematical
algorithm is shown below; implementation-specific caches do not change
its accepted state set.

\begin{center}
\begin{minipage}{.94\linewidth}
\small
\hrule
\vspace{.5em}
\textbf{Bounded breadth-first exclusion}
\begin{verbatim}
frontier := {initial state}; seen := {initial state}; g := 0
while frontier is not empty:
    next := empty
    for state in frontier:
        for direction in {U, D, L, R}:
            if the command is illegal: continue
            child := exact successor(state, direction)
            child := symmetry_representative(child)
            if g + 1 + h(child) >= B: continue
            if child is in seen: continue
            insert child into seen and next
            if child is the goal: return a shorter solution
    frontier := next; g := g + 1
return "no solution of length less than B"
\end{verbatim}
\hrule
\end{minipage}
\end{center}

\begin{theorem}[Soundness of exclusion]
\label{thm:exclusion}
If the bounded search exhausts its queue without finding the goal,
there is no solution of length less than \(B\).
\end{theorem}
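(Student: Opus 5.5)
We argue by contradiction. Suppose some solution \(\sigma\) of length \(L<B\) exists, and write its states as \(s_0,s_1,\ldots,s_L=s_*\). We may assume \(\sigma\) is a shortest solution. Then its states are pairwise distinct, and every prefix \(s_0,\ldots,s_g\) is a shortest route to \(s_g\). Otherwise we could splice in a shorter prefix and obtain a shorter solution, contradicting minimality. We will show by induction on \(g\) that, when the loop finishes depth \(g\), the set \(\mathit{seen}\) contains a state equivalent to \(s_g\) under the symmetry used by \texttt{symmetry\_representative}. Taking \(g=L\) forces the search to report the goal, contradicting the hypothesis that it exhausted its queue.

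\textbf{Pruning never removes the route.} For every \(g\le L\), the suffix \(s_g,\ldots,s_L\) is a completion of \(s_g\) of length \(L-g\). Corollary~\ref{cor:heuristic} gives \(h(s_g)\le L-g\), so \(g+h(s_g)\le L<B\). Hence the test \eqref{eq:prune} never discards \(s_g\). This also needs \(h\) to be invariant under the symmetry group, since \(h\) is evaluated on the representative. \(H_\alpha\) is invariant because the pattern tables transform \(M,a,e\) together, as in the symmetry subsection. \(H_\beta\) is a symmetric function of the labels, and we would check that it is invariant under whichever symmetries the real search uses. Those symmetries must preserve \(T\), \(E\), \(G\) and the command semantics.

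\textbf{The duplicate check never removes the route.} This is the step I expect to be the main obstacle, because BFS records only the first depth at which a state is seen. Suppose the representative of \(s_{g+1}\) is already in \(\mathit{seen}\), inserted at some depth \(g'\le g+1\). Since \(s_{g+1}\) is reached by a shortest prefix, \(g'\) cannot be smaller than \(g+1\). So \(g'=g+1\), and \(s_{g+1}\) itself is then in \(\mathit{next}\) at the correct depth. The induction therefore continues. The invariant I would prove is stronger: after depth \(g\) is processed, the frontier contains exactly the representatives of the unpruned states whose shortest distance equals \(g\).

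Why shortest distances are not lost: a state might first be reached by a non-shortest path. That cannot happen, because depths are processed in nondecreasing order and every edge has unit cost. The pruning test depends only on \(g+1\) and the child, so a state discarded at depth \(g+1\) is never needed at a smaller depth. One must also check that a representative reached at its shortest depth is expanded. It is inserted into \(\mathit{next}\), and \(\mathit{next}\) becomes the next frontier.

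\textbf{Symmetry commutes with successors.} Finally, the successor relation must commute with the symmetry action: if \(\phi\) is a symmetry, then \(\phi(s)\to\phi(s')\) exactly when \(s\to s'\) is legal, with directions permuted by \(\phi\). Then expanding the representative in place of the actual state \(s_g\) yields, among its successors, the representative of \(s_{g+1}\). Enumerating all four commands covers the permuted direction. With the goal \(s_*\) fixed by these symmetries, or with the goal test applied up to symmetry, reaching depth \(L\) triggers the return.
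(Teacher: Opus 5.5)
Your argument is correct. It shares the paper's first step: admissibility along the route keeps every prefix from being pruned. It differs in how it handles the duplicate check. The paper takes an arbitrary solution of length \(L<B\) and observes that a duplicate was first inserted at an equal or \emph{smaller} depth. The remaining suffix is then still available from that earlier arrival with total length at most \(L\). In effect the paper maintains the invariant that a state equivalent to \(s_k\) is expanded at some depth at most \(k\). You instead pass to a shortest solution, so a duplicate of \(s_{g+1}\) can only have been inserted at exactly depth \(g+1\) and is already in \texttt{next}. This gives a sharper invariant and a cleaner induction, and it makes the role of the symmetry explicit, which the paper leaves to its symmetry subsection.

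The price is that you need a lower bound on search depth, which the paper's version never uses. A state inserted at depth \(g'\) yields a real path of length \(g'\) from the initial state to \(s_{g+1}\) only because \(\tau\) is an automorphism that also fixes the initial state (as the paper notes). That fact should be stated where you assert that \(g'\) cannot be smaller than \(g+1\). Conversely, you do not need \(h\) to be \(\tau\)-invariant. Since \(\tau\) fixes \(s_*\), it carries the suffix of \(\sigma\) to a completion of \(\tau(s_{g+1})\) of the same length. So Corollary~\ref{cor:heuristic} applies directly to whichever representative is stored.

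Drop the ``stronger invariant'' and the claim that no state is first reached along a non-shortest path. Neither follows from admissibility and unit costs, and the justification you give does not establish them. Suppose \(h\) is admissible but not consistent; the paper proves only admissibility. Then every shortest route to some state \(x\) may pass through a pruned state while a longer route to \(x\) survives, so \(x\) is first inserted at a depth exceeding its distance. Your proof does not depend on this claim. The inductive step you wrote establishes the invariant only for the states of the chosen \(\sigma\), where admissibility along \(\sigma\) prevents pruning, and that is all the contradiction requires.
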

\begin{proof}
Suppose a solution of length \(L<B\) exists. At any state on that route,
its prefix length plus \(h\) is at most \(L\), by
Corollary~\ref{cor:heuristic}; thus \eqref{eq:prune} cannot delete the
prefix. A duplicate state has already been reached at an equal or
smaller depth, and the same completion is available from that earlier
arrival. Following these earlier arrivals, if necessary, gives a
retained route to the goal. This contradicts exhaustion without a goal.
\end{proof}

\subsection{Symmetry of the real instance}
Only reflection \(\tau(r,c)=(c,r)\) is used to merge real positions.
It preserves the initial state, targets, pair of gates, exit, and all
transition costs. Store the smaller of the row-major bitsets \(U\)
and \(\tau(U)\), reflecting \(p\) at the same time. If the two bitsets
are equal, the implementation keeps both active positions separately.
This may retain redundant symmetric states but does not merge
inequivalent states. Paths between representatives lift to real paths
because reflection is a transition-system automorphism and fixes
the endpoints. The \(D_4\) reduction of pattern tables is distinct
from this two-element symmetry of the real problem.

\subsection{Lossless storage}
\label{sec:encoding}
Each occupied set stores a 25-bit mask on \(K\). Let
\(k=17-|U\cap K|\), enumerate the other 96 cells in row-major order,
and let \(0\le x_1<\cdots<x_k<96\) be its exterior indices. Their
combinatorial rank is
\begin{equation}
 R=\sum_{j=1}^k \binom{x_j}{j}.
 \label{eq:rank}
\end{equation}
For fixed \(k\), this is an injective encoding with
\(0\le R<\binom{96}{k}\). Together with the core mask, it determines
\(U\) uniquely. A 64-bit word stores the core mask and low rank bits;
all additional bits are stored in a separate map and participate in
key equality. At most 87 bits suffice. No key bits are dropped.

For a given \(U\), a 17-bit mask records which active positions have
been visited. Seven further bits store the first depth at which this
occupied set was encountered; that depth is used for witness
reconstruction. The bound \(B=93\) fits within this field. The main
arrays require eleven payload bytes per occupied set: eight for the
low key and three for this metadata. The hash index, exceptional key
bits, frontiers, pattern tables, and allocation overhead are additional.
Hash collisions are resolved by full-key comparison, not probabilistic
fingerprinting. Frontier entries use five bytes to identify an occupied
set and one of its seventeen active positions.

\subsection{Optimizations and restart invariants}
For a walk, Lemma~\ref{lem:lipschitz} permits an inexpensive preliminary
test using
\[
 H_\alpha(s')+\max\{0,H_\beta(s)-b\},
 \quad
 b=\begin{cases}0,&\alpha\text{-walk},\\1,&\beta\text{-walk}.\end{cases}
\]
This value is at most the full successor heuristic. If it already
exceeds the available budget, no hash lookup is needed. Otherwise the
full bound is used. A bounded cache stores occupied sets and their
residual obligations; a hit requires an exact, immutable set identifier.
It cannot change state equality or the result of a heuristic evaluation.

Checkpoint files retain the full visited collection, the current
frontier, the generated next frontier, the current layer, the processed
position within that layer, and all exceptional key bits. Rebuilding
the hash index changes only how these exact keys are found.
When a capacity stop occurs partway through a state's four successors,
resumption repeats that state; visited bits suppress already generated
children. No unprocessed successor is skipped.
Only normal queue exhaustion constitutes an exclusion result.

\section{Construction and completed computation}
\label{sec:result}
Table~\ref{tab:route} gives a 93-command route. A separate Python model
replays the exact ray rule and conditional gate rule. It agrees with
all 93 rows of the released step-by-step coordinate record.
The gates first open after command 90, when \(p=(8,11)\).
The final three downward walks reach \(E\).

\begin{table}[tb]
\centering
\begin{tabular}{rll}
\toprule
Commands & Direction sequence & Player after the block\\
\midrule
1--10 & \texttt{URULULLURR} & \((6,8)\)\\
11--20 & \texttt{RDLLLULLRD} & \((7,6)\)\\
21--30 & \texttt{LURLURRUDL} & \((7,8)\)\\
31--40 & \texttt{LLLULURURR} & \((4,7)\)\\
41--50 & \texttt{URUULDRDDD} & \((5,2)\)\\
51--60 & \texttt{RRDLDRDRRR} & \((8,7)\)\\
61--70 & \texttt{URUULDDDDR} & \((5,8)\)\\
71--80 & \texttt{URULLDLDRD} & \((7,7)\)\\
81--90 & \texttt{DLLURRRDDR} & \((8,11)\)\\
91--93 & \texttt{DDD} & \((11,11)\)\\
\bottomrule
\end{tabular}
\caption{A shortest route. Directions are up, down, left, and right.
Each letter costs one command, even when it causes a nonlocal exchange.}
\label{tab:route}
\end{table}

\begin{samepage}
The route uses 37 exchanges, 44 \(\pi\)-changing walks, and
12 \(\eta\)-changing walks. Thus \((\alpha,\beta)=(81,12)\).
It attains the analytic residual-motion lower bound of 12 and the
three-walk terminal bound in Appendix~\ref{app:elementary}.
Neither fact alone proves the total optimum: routes with a different
allocation between \(\alpha\) and \(\beta\) must still be excluded.
\par\end{samepage}

\begin{table}[tb]
\centering
\begin{tabular}{lr}
\toprule
Quantity & Recorded value\\
\midrule
Strict exclusion bound \(B\) & 93\\
Initial heuristic \(H_\alpha+H_\beta\) & \(65+10=75\)\\
Retained occupied sets & 605,349,215\\
Retained states \((U,p)\) & 1,863,925,136\\
Largest frontier (depth 27) & 179,737,970\\
Last nonempty retained layer & 74\\
Final configured occupied-set capacity & 650,000,000\\
Verified witness length & 93\\
\bottomrule
\end{tabular}
\caption{Deterministic counts from the completed strict-bound-93 run.
State counts include the initial state and use the representation and
symmetry convention of Section~\ref{sec:search}.}
\label{tab:results}
\end{table}

The search completed with the following output:
\begin{center}
\begin{minipage}{.96\linewidth}
\small\ttfamily
EXHAUSTED\_NO\_ROUTE\_BELOW 93\\
OCCUPIED\_SETS 605349215 REACHED 1863925136
\end{minipage}
\end{center}
This is the program's normal exhaustion outcome, returned with exit
code 1. Its final frontier, at depth 74, contained one state and
generated no retained successor.
The final layer being less than 92 is expected: the heuristic can
exclude a prefix long before it reaches the candidate length limit.
Every prefix that could complete within 92 commands remains covered
by Theorem~\ref{thm:exclusion}.

The recorded run was continued across complete checkpoints. An initial
storage implementation reached depth 30; an equivalent smaller
representation continued the same search. A later stop at a configured
capacity of 600,000,000 occupied sets occurred within depth 40.
After increasing the capacity to 650,000,000, the exact saved state was
restored and the queue was exhausted. Counts at the checkpoint
boundaries match.

\begin{samepage}
The consolidated layer record satisfies
\[
 \sum_{g=0}^{74} F_g=1,863,925,136.
\]
Here \(F_g\) is the initial frontier size of layer \(g\).
The complete per-layer counts and all three log segments are included
with the ancillary material.
\par\end{samepage}

\begin{proof}[Proof of Theorem~\ref{thm:main}]
The route in Table~\ref{tab:route}, verified by exact replay, gives
\(L_{\min}\le93\). The completed exhaustive computation and
Theorem~\ref{thm:exclusion} give \(L_{\min}\ge93\).
Therefore \(L_{\min}=93\).
\end{proof}

\section{Reproducibility and scope of the evidence}
\label{sec:repro}
The ancillary directory contains the C++17 search and pattern-table
generators, the independent Python transition model, the witness,
the raw logs, and a reproduction driver. The main entry point is
\texttt{crystal\_verify\_optimum.py}. From that directory, run
\begin{center}
\begin{minipage}{.96\linewidth}
\small\ttfamily
python3 crystal\_verify\_optimum.py --check-route-only\\[3pt]
python3 crystal\_verify\_optimum.py --work-dir /path/to/work
\end{minipage}
\end{center}
The first command checks only the upper bound. The second compiles the
programs, regenerates the symmetry and distance tables, and starts
the full exclusion from the initial state. It accepts the lower-bound
result only when both the process return code and the complete
exhaustion message agree with the strict bound. A capacity stop,
interruption, or a shorter solution does not produce a verification
success. Python 3.9 or later, a C++17 compiler supporting
\texttt{unsigned \_\_int128}, and a POSIX system with memory mapping
are required.

\paragraph{Validation and provenance.}
The released record includes source and table SHA-256 values, a complete
route replay, and checkpoint-continuity checks. Development checks
covered 100,000 occupancy-codec round trips, including high-bit cases,
and fresh-versus-resumed complete smaller-bound searches. The selected
storage engines agreed on a strict-bound-89 regression with
15,162,550 occupied sets and 40,547,600 states. These regression checks
support implementation consistency; the bound-93 exhaustion is the
computation that establishes the final lower bound.

\paragraph{Resources.}
The recorded computation ran on a macOS machine with 16\,GiB of physical
memory and incurred substantial memory pressure. The eleven-byte main
payload at the final set count is 6,658,841,365 bytes, excluding the
hash index and other arrays. The two distance arrays alone total
1,651,003,648 bytes. These are calculable data sizes, not measured peak
resident memory. Because development, restarts, and paging were
interleaved, we do not report the elapsed development session as a
controlled solver-runtime benchmark.

\paragraph{Trust boundary.}
The lower bound is a computer-assisted exhaustion argument. Its
mathematical part proves that the transitions, equivalences, and
pruning rules preserve every shorter solution. Its computational
part relies on the supplied implementations, compiler, runtime, and
recorded execution. The file named
\texttt{crystal\_optimality\_certificate.json} is a manifest of the
result and its provenance; it is not a succinct proof object that an
independent formal kernel can check. Hashes establish artifact identity,
not algorithmic correctness. The upper-bound replay uses a separate
implementation, but we do not claim an independently implemented
full lower-bound verification.

\paragraph{Scope.}
The theorem concerns the specified board and conditional-gate,
nearest-crystal rules. It does not establish optimality for variants
with adjacent-only exchange or permanently latched gates. Nor does
the instance-specific computation establish comparative performance
against other optimal-search methods. An independent reproduction
or a small proof-checking kernel would strengthen the evidence.

\paragraph{Statement on AI use.}
OpenAI Codex was used to write and revise the search programs and to assist
with the computational investigation and preparation of this manuscript.
The author has reviewed and understood all content of this manuscript,
including the mathematical arguments and the search implementation.
The reported results are fully reproducible from the mathematical
specification, proofs, source code, data, and reproduction instructions
provided with this paper.

\appendix
\section{Two elementary bounds}
\label{app:elementary}
\begin{proposition}[Residual cost from the initial state]
Every solution of the specified instance has \(\beta\ge12\).
\end{proposition}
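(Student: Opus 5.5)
The plan is to refine the labelled-residual argument behind Proposition~\ref{prop:hbeta}, evaluated at the initial state. There the statewise bound only gives \(10\), so the additional \(2\) must come from the actual ray geometry rather than from residual coordinates alone. We fix labels as in Section~\ref{sec:residual}. The fifteen crystals on \(T\) have \(r=(0,0)\). The crystal \(v\) at \((1,1)\) has \(r_v=(-2,-2)\). The active label \(p\) at \((10,10)\) has \(r_p=(1,1)\).

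The first step is the baseline. Since \(r_v\) must return to \((0,0)\), \(v\) needs at least \(4\) \(\beta\)-walks. The exit-assigned label \(e\) needs \(\norm{r_e-(2,2)}\), and every other label must return to residual \((0,0)\). We then split on \(e\). If \(e\ne p\), then \(p\) costs \(2\) to return and \(e\) costs at least \(4\), giving \(\ge 10\) before any activation cost. If \(e=v\), the cost of \(v\) alone is \(8\). So the delicate case is \(e=p\), where \(D_0=6\).

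The second step handles activation of \(v\) by a label \(j\), which must stand in row~\(1\) or column~\(1\), i.e.\ residual coordinate \(-2\). If \(j=p\), the detour \(\delta_p=6\) of \eqref{eq:detour} already gives \(12\). If \(j\) is a target label, its path \(0\to-2\to0\) costs \(4\), giving \(10\). The same holds in the cases \(e\ne p\), which therefore also need \(2\) more. Reflection \(\tau\) lets us assume \(j\) activates \(v\) along row~\(1\).

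The remaining step, which I expect to be the main obstacle, is to extract two further \(\beta\)-walks from the nearest-crystal and ray rules. The intended mechanism uses the fact that labels are stationary under exchange. Right after \(j\), standing at some \((1,c)\) or \((2,c)\), exchanges with \(v\), label \(j\) is parked off-target in \(v\)'s row or near it. A rightward walk of \(v\) is then illegal while a crystal lies on its ray. So either \(v\) first walks down column~\(1\) and must later cross its row toward \(K\) with an empty ray, or \(j\) must be re-activated and walked back before \(v\) can proceed.

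I would argue that the parked label \(j\) must itself be re-activated in row~\(1\) or column~\(1\). The only cheap candidate for this is \(v\) (or the then-active label) sitting in that row. Tracking the order of these events should then force either a second excursion of some label to residual coordinate \(-2\), costing \(\ge2\) extra, or \(v\) taking a non-monotone residual path, with the same \(+2\). The point is that the two labels cannot both return with purely monotone residual paths while each blocks the other's ray.

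I would first try to make this a clean case split on which of \(v\) and \(j\) leaves the row-\(1\)/column-\(1\) region first, checking each case against the ray rule. If that becomes unwieldy, the fallback is a small exhaustive check of the \(\beta\)-relevant events near the northwest corner, treating \(K\)-interior motions as free.
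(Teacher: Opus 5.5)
There is a genuine gap in your main step, and the blocking mechanism you propose cannot close it. The problem is that an interaction between \(j\) and \(v\) near the northwest corner can be resolved entirely by \(\pi\)-walks, which are charged to \(\alpha\), not \(\beta\). Here is an example, assuming the rows and columns used are otherwise empty:
\begin{enumerate}
\item A target label \(j\) walks from \((3,9)\) up column~9 to \((1,9)\) (two \(\beta\)-walks) and exchanges left with \(v\).
\item \(v\) walks down column~1 and along row~3 to \((3,3)\) (four \(\beta\)-walks), continues right by \(\pi\)-walks to \((3,9)\), and exchanges up with \(j\).
\item \(j\) walks left along the now empty row~1 to \((1,3)\) by \(\pi\)-walks, then down column~3 to \((3,3)\) (two \(\beta\)-walks).
\end{enumerate}
Each parked label blocks the other at some point, yet \(v\) and \(j\) use exactly the \(4+4\) already counted. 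So neither your case split on which of \(v,j\) leaves the corner first, nor a local exhaustive check near the northwest corner, can lift the case \(e=p\), \(j\ne p\) from \(10\) to \(12\).

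The missing two walks belong to \(p\). They come from the first exchange of the whole route, which your argument never uses.
\begin{itemize}
\item Some exchange must occur, since otherwise \(v\) never moves.
\item Before the first one, every label other than \(p\) is at its initial cell: either a target, with row and column in \(\{4,\dots,8\}\), or \((1,1)\).
\item To exchange, \(p\) must share a row or column with one of these, so one residual coordinate of \(p\) must reach \(0\) or \(-2\).
\item When \(p\) is exit-assigned, its residual path from \((1,1)\) through such a coordinate to \((2,2)\) therefore has length at least \(4\), not \(2\).
\end{itemize}
Add \(4\) for \(v\) and \(4\) for the target label \(j\). These are three distinct labels with disjoint \(\beta\)-walks, so the total is \(12\). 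This is the paper's argument, and your subcase \(j=p\) agrees with it.

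Your bookkeeping for \(e\ne p\) is also off. There the baseline is already \(10\), and the activation detour adds at least \(4\) for every admissible \(j\), including \(j=e\). For instance, \(p\) going from \((1,1)\) through coordinate \(-2\) to \((0,0)\) costs \(6\) rather than \(2\). So those cases give \(\beta\ge14\) and need nothing further.
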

\begin{proof}
Let \(A\) be the initially active label and \(B\) the label at \((1,1)\).
Their residual positions are \((1,1)\) and \((-2,-2)\); the other fifteen
labels have residual position \((0,0)\).
Let \(J\ne B\) first activate \(B\). One coordinate of \(J\)'s residual
path must reach \(-2\).

If the exit-assigned label is not \(A\), the sum of all direct residual
distances is 10. Whether \(J\) starts at \((1,1)\) or \((0,0)\), and
whether it ends at \((0,0)\) or \((2,2)\), visiting a coordinate \(-2\)
adds at least four to its direct distance. Thus \(\beta\ge14\).

Suppose instead that \(A\) is exit-assigned. If \(J=A\), the residual
path of \(A\), from \((1,1)\) through a coordinate \(-2\) to \((2,2)\),
has length at least eight. Label \(B\) requires at least four, so
\(\beta\ge12\).
If \(J\ne A\), it is one of the fifteen initially central labels.
It starts and ends at \((0,0)\) but visits a coordinate \(-2\), costing
at least four. Label \(B\) also costs at least four. Finally, before
the first exchange of the entire route, all labels other than \(A\)
are stationary. To exchange with one of them, \(A\) must reach a
residual coordinate 0 or \(-2\). Its path from \((1,1)\), through
such a coordinate, to \((2,2)\) costs at least four.
These costs concern three distinct labels and can be summed, giving 12.
\end{proof}

\begin{proposition}[Terminal walking cost]
After the last restoration of \(C=T\), at least three further commands
are required to reach \(E\).
\end{proposition}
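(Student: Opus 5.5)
Fix the last command after which \(C=T\) holds and never fails again before first entry into \(E\). Call the state just after it \((p,T)\). Since \(|C|=16\) and \(C=T\), the player is not on a target cell. By Lemma~\ref{lem:goal}, the route ends with a walk from a gate into \(E\). The plan is to show that the player, starting from \(p\), needs at least three commands to reach \(E\) without ever disturbing \(T\), and to check separately whether \(p\) can already lie close to \(E\).

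The key steps run in this order. First, from the definition of the last restoration, every later command keeps \(C=T\). Any exchange from \(p\) would swap the player onto the nearest crystal, which is a target cell \(q\in T\). The new crystal set would then be \((T\setminus\{q\})\cup\{p\}\ne T\), since \(p\notin T\). That contradicts the choice of the last restoration, so every subsequent command is a unit walk. Second, a unit walk costs one and changes one coordinate by one. Reaching \(E=(11,11)\) therefore takes at least \(\norm{E-p}\) commands, and it suffices to show \(\norm{E-p}\ge3\).

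Third, I locate \(p\) itself. The restoring command must place a crystal onto the missing target. A walk leaves the crystal set unchanged, so a walk cannot create \(C=T\) from \(C\ne T\), and the restoring command is an exchange. Before the exchange the player stands at some \(t\in T\); after it, the player sits at the former crystal position \(p\), which lies in the row or column of \(t\), with no crystal strictly between. Every row or column through a target cell has index in \(\{4,\ldots,8\}\). Hence \(p\) has at least one coordinate at most \(8\), which gives \(\norm{E-p}\ge 11-8=3\).

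The one case to watch is the initial state: if \(C_0=T\) held initially there would be no restoring exchange, but \(C_0\ne T\) by \eqref{eq:initial}, so this does not arise. I expect the main subtlety to be the first step, namely justifying that no exchange can occur afterward. That requires the observation that every crystal is then on a target, so any exchange moves a crystal off \(T\). The witness in Table~\ref{tab:route}, which restores at \((8,11)\) and then makes three walks, shows that the bound is attained.
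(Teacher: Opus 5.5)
Your proof is correct and follows essentially the paper's argument. The restoring command must be an exchange from the missing target, so the player lands in a row or column indexed in \(\{4,\ldots,8\}\) and \(\norm{E-p}\ge3\); any later exchange would move a crystal off \(T\), so only unit walks remain (the ``no exchange'' step should refer to the player's current off-target position after intermediate walks rather than to \(p\) itself, but the reasoning is identical).
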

\begin{proof}
A restoration must be an exchange, since a walk does not change the
crystal set. Immediately before it, the player is at a missing target
\(t\in T\) and the exchanged crystal is at the sole off-target location
\(q\). The exchange leaves the player at \(q\), sharing a row or column
with \(t\). Hence at least one coordinate of \(q\) belongs to
\(\{4,\ldots,8\}\), and \(\norm{q-E}\ge3\).
Any subsequent exchange would put a crystal at the player's current
off-target location and remove one from a target, breaking \(C=T\).
Completion would then require a later restoration, a contradiction.
All remaining commands are walks, requiring at least \(\norm{q-E}\).
\end{proof}

\section{Ancillary files}
\label{app:files}
\begin{center}
\small
\begin{tabular}{p{.48\linewidth}p{.44\linewidth}}
\toprule
File & Role\\
\midrule
\texttt{crystal\_verify\_optimum.py} & Full reproduction driver\\
\texttt{crystal\_rules.py} & Independent exact transition model\\
\texttt{crystal\_lean\_cached\_bfs.cpp} & Final bounded-search implementation\\
\texttt{crystal\_encoded\_bfs.cpp} & Earlier checkpoint-compatible implementation\\
\texttt{crystal\_symmetry.cpp} & Mask representatives and stabilizers\\
\texttt{crystal\_oriented\_pdb.cpp} & Goal-oriented core distances\\
\texttt{crystal\_landmark\_pdb.cpp} & Northwest-visit distances\\
\texttt{crystal\_outer\_access.hpp} & Residual activation lower bound\\
\texttt{crystal\_occupancy\_codec.hpp} & Lossless occupied-set encoding\\
\texttt{crystal\_mapped.hpp} & Memory-mapped table storage\\
\texttt{crystal\_best\_verified.json} & 93-command witness\\
\texttt{crystal\_best\_verified\_steps.csv} & All intermediate witness positions\\
\texttt{crystal\_certificate\_bound93\_layers.txt} & Consolidated layer counts\\
\texttt{crystal\_bound93\_segment1.log}, etc. & Three original execution segments\\
\texttt{crystal\_optimality\_certificate.json} & Result, rule specification, and hashes\\
\bottomrule
\end{tabular}
\end{center}
Large regenerated pattern tables and binary checkpoints are omitted
from the submission archive. Their generators and table hashes are
included. The standalone manuscript embeds its diagrams and bibliography
and does not read the ancillary files during typesetting.

\end{document}